\newtheorem{thm}{\protect\theoremname}
\newtheorem{prop}[thm]{\protect\propositionname}
\newtheorem{rem}[thm]{\protect\remarkname}
\providecommand{\definitionname}{Definition}
\providecommand{\propositionname}{Proposition}
\providecommand{\remarkname}{Remark}
\providecommand{\theoremname}{Theorem}
\providecommand{\lemmaname}{Lemma}
\providecommand{\corollaryname}{Corollary}
\newcommand\scalemath[2]{\scalebox{#1}{\mbox{\ensuremath{\displaystyle #2}}}}
\begin{document}

\title{\LARGE \bf Convex Relaxations of $SE(2)$ and $SE(3)$ for Visual Pose Estimation}
\author{Matanya B. Horowitz, Nikolai Matni, Joel W. Burdick
\thanks{Matanya Horowitz, Nikolai Matni, and Joel Burdick are with the 
	Department of Control and Dynamical Systems,
        Caltech, 1200 E California Blvd., Pasadena, CA.
        The corresponding author is available at {\tt\small mhorowit@caltech.edu}}%
\thanks{Matanya Horowitz is supported by a NSF Graduate Research Fellowship.%
}
}

\maketitle
\thispagestyle{empty}
\pagestyle{empty}

\begin{abstract}
This paper proposes a new method for rigid body pose estimation based on spectrahedral representations of the tautological orbitopes of $SE(2)$ and $SE(3)$.  The approach can use dense point cloud data from stereo vision or an RGB-D sensor (such as the Microsoft Kinect), as well as visual appearance data as input. The method is a convex relaxation of the classical pose estimation problem, and is based on explicit linear matrix inequality (LMI) representations for the convex hulls of $SE(2)$ and $SE(3)$. Given these representations, the relaxed pose estimation problem can be framed as a robust least squares problem with the optimization variable constrained to these convex sets.  Although this formulation is a \emph{relaxation} of the original problem, numerical experiments indicates that it is indeed exact -- i.e. its solution is a member of $SE(2)$ or $SE(3)$ -- in many interesting settings.  We additionally show that this method is \emph{guaranteed} to be exact for a large class of pose estimation problems.

\end{abstract}

\section{Introduction} \label{sec:intro}

The pose estimation problem in computer vision is to identify a transformation that, when applied to a known object model, yields the object as perceived through the system's sensors.  Pose estimation is fundamental in robotics as it allows systems to reason about the state of the environment from their sensory data. Applications arise in robotic manipulation \cite{Hebert:2011ec,hudson2012end,Srinivasa:2010tb} object tracking \cite{Ma:2011ix}, and visual odometry \cite{pfister2002weighted}, among others. 

Many successful techniques have been proposed and validated. The typical procedure consists of identifying noteworthy features of the object model, and then identifying similar features from the sensed input. Given this correspondence, one of several methods estimates the transformation that aligns as many of these features as possible, for example \cite{lepetit2009epnp}. Typically, potential mismatches in correspondence require the optimization to be repeated, giving rise to algorithms such as iterated closest point (ICP) \cite{Besl:wl}. These techniques are typically combined with probabilistic selection of features for matching using the RANSAC algorithm.

However, many existing methods for pose estimation are primarily local-search methods that perform iterative linearization of the coordinate transform from the object's last known position via Levenberg-Marquardt \cite{Srinivasa:2010tb, Krainin:2010wu}. While such approximations are appropriate at high frame rates or slow object/robot movement, these methods require re-initialization if the object is not continually tracked successfully.  These issues can be alleviated by techniques that provide a strong prior on the object, such as those incorporating Extended Kalman Filters \cite{Ma:2011ix,Chung:2007iu,mourikis2007sc}. 

The problem of pose estimation is also often plagued by the presence of outliers in the sensory data stream, which may arise from multiple causes, such as erroneous visual depth discontinuities. Additionally, the construction of approximation algorithms further aggravates the problem of mis-correspondence. The common Best-Bin First (BBF) \cite{Beis:1997gb}, used to rapidly compute approximate nearest neighbors, creates spurious correspondences in its quest for speed. Further, there is the tradeoff in feature-detection methods, for instance in the choice between SIFT \cite{Lowe:2004kp} and SURF \cite{Bay:2008ud} features, with the former typically being more
accurate and the latter more rapid. The system designer therefore faces a number of tradeoffs between robustness and speed, contributing to the presence of outliers and uncertainty \cite{Martinez:2010jy,Hudson:auro2013}. Indeed, it is the presence of such errors that limits the use of closed form solutions to the pose estimation problem in practice \cite{Horn:87, Horn:88}.

Inspired by recent advances in convex algebraic geometry (reviewed in Section \ref{sec:review}), this paper takes a quite different approach to the pose estimation problem: pose estimation is framed as a (robust) least squares problem (Section \ref{sec:leastsquares}), where the optimization variable lies in the convex hull of either $SE(2)$ or $SE(3)$. 

By relaxing the constraint on the optimization variable (i.e. the transformation matrix), and allowing it to lie in the convex hull of the Lie group, rather than the Lie group itself, the resulting optimization is a semidefinite program (SDP), and hence convex. Formulating the pose estimation problem as a convex optimization problem is appealing as there exist a number of off-the-shelf public and commercial solvers that allow for the practitioner to focus on the design of the system rather than the details of the numerical solution method. These solvers obtain second order convergence and may therefore be quite rapid in practice. This emphasis on a convex relaxation of the $SE(n)$ constraint has also been examined in \cite{forbes}, developed independently of the current work, where the authors relax the constraint of the special euclidean group by requiring the solution to lie in the convex hull of the orthogonal group $O(n)$.

Furthermore, if the relaxation can be shown to be exact, that is to say that the optimal solution to the problem is a member of $SE(2)$ or $SE(3)$, then the procedure is guaranteed to yield the \emph{global optimum}, alleviating the need for any form of probabilistic sampling of the solution space.  Indeed, numerical experiments show that under many standard settings the relaxation is exact, and our method yields the optimal solution to the pose estimation problem.  Encouraged by this empirical success, we also show that under mild assumptions a wide class of pose estimation problems are provably exact.  In Section \ref{sec:discussion}, we comment on both promising directions for expanding the class of problems for which this method is provably exact, and on heuristics for generating a transformation matrix given a solution that is not a member of $SE(2)$ or $SE(3)$.

In the spirit of recent results in machine learning, we also show that a robust variant of the problem may be framed as an $\ell_1$-regularized optimization problem \cite{Chandrasekaran:2012hl} (Section \ref{sec:robust}). The ability to incorporate regularization techniques further distinguishes our method from existing solutions. Application of the methodology to a practical example is demonstrated in Section \ref{sec:examples}, along with a comparison to other approaches. This example shows that our approach provides  consistently better estimates across all ranges of noise, and is particularly advantageous as the sensor noise level increases.
\section{Brief Introduction to Orbitopes} \label{sec:review}

The focus of this paper is on estimating the poses of rigid objects in $SE(n)$ and $SO(n)$ for
$n=2,3$. Our analysis does not carry directly to $n>3$, and for the remainder of the paper we use $SE(n)$ to refer to only $n=2,3$. Our approach relies upon the concept of an \emph{orbitope}, which is formally the convex
hull of the orbits given by the action of a compact algebraic group $G$ acting linearly on a real
vector space. Orbits arising from such groups have the structure of a real algebraic variety and
thus, the orbitope is a \emph{convex semi-algebraic} set. These objects, whose study was formalized
and initiated in \cite{orbitopes}, are at the heart of the emerging field of convex algebraic
geometry. 

In the case of finite groups $G$, the study of such objects has a rich history. The platonic solids, the permutahedra, the Birkhoff polytope and the traveling saleseman polytopes are examples of such discretely generated orbitopes. These well studied objects have been analyzed in depth in the
context of combinatorial optimization \cite{orb29,orb38,orb25}.  Orbitopes of compact Lie groups
have proved useful in the analysis of protein structure prediction \cite{orb23}, quantum information
theory \cite{orb2} and real algebraic geometry (in \cite{orb4_b}, certain $SO(n)$ orbitopes are used
to show that there are many more non-negative polynomials than sums of squares).

We will be interested in the convex representation of orbitopes generated by a group acting on its identity element, that is to say the group's \emph{tautological orbitope}.  In particular, our method is based on recent results on the \emph{spectrahedral representation} of tautological orbitopes for the groups $SO(n)$, $n=2,3$.

A set is said to admit a spectrahedral representation if it can be described as the intersection of the cone of positive semidefinite matrices and and an affine subspace, i.e. if it can be written in terms of a linear matrix inequality (LMI).  It is possible to then optimize affine and convex quadratic functions over these sets using the well established methods of \emph{semi-definite programming} (SDP).  Additionally, if a set is second-order cone (SOC) representable -- a restricted class of spectrahedral constraints -- then SOC programming techniques may be leveraged.  These have the advantage of being significantly faster than their SDP counterparts.



\subsection{The tautological orbitope for $SO(2)$}

We begin by noting that a simple parameterization of proper rotations in 2-dimensional space is
given by
  \begin{equation}
   SO(2)=\left\{ \left[\begin{array}{cc}
      \cos\theta & \sin\theta\\
     -\sin\theta & cos\theta
   \end{array}\right]\ :\ \theta\in[0,2\pi)\right\} .\label{eq:so-2}
  \end{equation}
Letting $x=\cos\theta$ and $y=\sin\theta$, we can equivalently rewrite this expression as
  \begin{equation}
	SO(2) = \mathcal{L}\cdot\left\{ (x,y)\in\mathbb{R}^2 \ :\ x^{2}+y^{2}=1\right\} \label{eq:so2-mapping}
\end{equation}
for a linear mapping $\mathcal{L}: \mathbb{R}^{2} \rightarrow \mathbb{R}^{2\times2}$.  In this way,
$SO(2)$ is expressed as a linear map of a simple set, i.e.  the unit sphere.  This is key as the convex hull operator, which we denote by $\text{conv}(\cdot)$, commutes with linear mappings.

Next, this constraint is {\em relaxed} by taking its convex hull. The unit sphere becomes the unit
disk, and thus the constraint is replaced with $x^{2}+y^{2}\le1$. While simple, the
computational gains realized by this step are significant. 
  \begin{align}
     \text{conv}(SO(2))
      = & \left\{ \left[\begin{array}{cc}
                     x & y\\
                    -y & x
          \end{array}\right]\ :\ x^{2}+y^{2}\leq1\right\} \nonumber \\
         = & \left\{ \left[\begin{array}{cc}
                     x & y\\
                    -y & x
             \end{array}\right]\ :\ \scalemath{.8}{
	\left[\begin{array}{cc}
            I & \left(\begin{array}{c}
              x\\
              y
          \end{array}\right)\\
    \begin{array}{cc}
      (x & y)\end{array} & 1
      \end{array}\right]}\succcurlyeq0\right\} \label{eq:conv-so2-constraint}
  \end{align}
where the last equality follows by applying the Schur complement \cite{Boyd:2004uz} to the SOC constraint $x^{2}+y^{2}\leq1$.  The latter constraint in (\ref{eq:conv-so2-constraint}) is in fact an LMI,
and the unit disk is therefore said to be {\em spectrahedrally representable}, and thus tractable to optimize over.

\begin{rem}
The orbitope of $conv\left(SO(2)\right)$ is SOC representable. These
objects have significant computational advantages over general semidefinite programs and
optimization over hundreds of variables may be done on time scales of several milliseconds
\cite{Domahidi:2013uq}.
\end{rem}

\subsection{The tautological orbitope of $SO(3)$}
An explicit representation of the tautological orbitope of $SO(3)$ is given by the following result from \cite{orbitopes}.

\begin{prop}
The tautological orbitope $\text{conv}\left(SO(3)\right)$ is a spectrahedron
whose boundary is a quartic hypersurface. In fact, a $3\times3$-matrix
$X=\left(x_{ij}\right)$ lies in $\text{conv}\left(SO(3)\right)$
if and only if it satisfies (\ref{eq:sdp_rep_so3}).
\end{prop}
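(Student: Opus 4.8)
The plan is to realize $\text{conv}(SO(3))$ as the image of the spectraplex of trace-one positive semidefinite $4 \times 4$ matrices under an \emph{invertible} linear change of coordinates, so that the defining positive-semidefiniteness constraint transports directly to an LMI in the entries $x_{ij}$. First I would exploit the quaternionic double cover $q \mapsto R(q)$ of $SO(3)$ by the unit sphere $S^{3} = \{q = (q_0,q_1,q_2,q_3) : \|q\| = 1\}$. Each entry $x_{ij}$ of $R(q)$ is a homogeneous quadratic in $q$, hence a linear function of the rank-one symmetric matrix $Q = qq^{\top}$. Writing this as $R(q) = \Lambda(qq^{\top})$ for a fixed linear map $\Lambda$ on symmetric $4 \times 4$ matrices, and using that $\text{conv}(\cdot)$ commutes with linear maps, I would obtain
\begin{equation}
\text{conv}(SO(3)) = \Lambda\left( \text{conv}\left\{ qq^{\top} : \|q\| = 1 \right\} \right).
\end{equation}

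Next I would identify the inner convex hull. The matrices $qq^{\top}$ with $\|q\| = 1$ are exactly the rank-one orthogonal projections of trace one, and their convex hull is the full spectraplex $\{ Q \succeq 0 : \operatorname{tr} Q = 1 \}$ (every trace-one positive semidefinite matrix is a convex combination of rank-one projections via its spectral decomposition). This already exhibits $\text{conv}(SO(3))$ as a \emph{spectrahedral shadow}. The crucial and most delicate step is to show this shadow is itself a spectrahedron carrying no auxiliary variables. Here I would perform a dimension count: symmetric $4 \times 4$ matrices form a space of dimension ten, the affine slice $\operatorname{tr} Q = 1$ has dimension nine, and the target $\mathbb{R}^{3 \times 3}$ also has dimension nine. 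I would then verify by direct computation that the ten monomials $q_i q_j$ are recovered linearly and uniquely from the nine entries $x_{ij}$ together with the normalization $\operatorname{tr} Q = 1$: concretely, the diagonal of $Q$ is determined by the diagonal of $X$ and the constant, while each off-diagonal $q_i q_j$ is a linear combination of a symmetric pair $x_{ij} + x_{ji}$ or an antisymmetric pair $x_{ij} - x_{ji}$. This establishes that $\Lambda$ restricted to the slice is a linear \emph{isomorphism}, so it admits an explicit inverse $X \mapsto M(X)$, and the shadow collapses to the genuine spectrahedron $\{ X : M(X) \succeq 0 \}$ of (\ref{eq:sdp_rep_so3}).

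With the bijection in hand the two inclusions follow cleanly. For the forward direction, any $X \in \text{conv}(SO(3))$ equals $\Lambda(Q)$ for some admissible $Q$, whence $M(X) = Q \succeq 0$. For the reverse, given $M(X) \succeq 0$ I would take its spectral decomposition $M(X) = \sum_k \lambda_k q_k q_k^{\top}$ into orthonormal eigenvectors, with $\lambda_k \ge 0$ and $\sum_k \lambda_k = \operatorname{tr} M(X) = 1$ (the trace being identically one by construction), and apply $\Lambda$ to obtain $X = \Lambda(M(X)) = \sum_k \lambda_k R(q_k)$, an explicit convex combination of rotations. Finally, the boundary claim would follow by noting that $\partial \{ X : M(X) \succeq 0 \}$ lies in the zero set of $\det M(X)$, which, being the determinant of a $4 \times 4$ matrix whose entries are affine in $X$, is a polynomial of degree four; a short check that this quartic does not drop degree and genuinely cuts out the boundary gives the quartic hypersurface. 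I would close by remarking that the clean collapse of the shadow to a spectrahedron is special to the coincidence of dimensions afforded by the quaternionic parameterization, and is precisely what fails for $SO(n)$ with $n > 3$.
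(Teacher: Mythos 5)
Your proposal is correct and follows essentially the same route as the paper's proof: the quaternionic parameterization of $SO(3)$, the passage to the rank-one matrix $qq^{\top}$, the identification of the convex hull of rank-one trace-one projections with the spectraplex $\{Q \succeq 0 : \operatorname{tr} Q = 1\}$, and the inversion of the (affine) linear map back to the $x_{ij}$ coordinates. In fact you supply details the paper leaves implicit -- the dimension count and explicit monomial recovery establishing that the map restricted to the trace-one slice is an isomorphism (so the spectrahedral shadow is a genuine spectrahedron), and the degree-four determinant argument for the boundary claim.
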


\begin{figure*}[t!]
\begin{centering}
\begin{equation}
\left(\begin{array}{cccc}
1+x_{11}+x_{22}+x_{33} & x_{32}-x_{23} & x_{13}-x_{31} & x_{21}-x_{12}\\*
 * & 1+x_{11}-x_{22}-x_{33} & x_{21}+x_{12} & x_{13}+x_{31}\\*
 * & * & 1-x_{11}+x_{22}-x_{33} & x_{32}+x_{23}\\*
 * & * & * & 1-x_{11}-x_{22}+x_{33}
\end{array}\right)\succeq0\label{eq:sdp_rep_so3}
\end{equation}
\par\end{centering}
\caption{Spectrahedral representation of $\text{conv}(SO(3))$ \cite{orbitopes}.  Omitted $*$ elements indicate the symmetric completion of the matrix.}
\end{figure*}

\begin{proof}
The following is a slightly modified version of the original derivation
suggested by Pablo Parrilo to the authors of \cite{orbitopes}, included here for completeness. An explicit parameterization of $SO(3)$ is given by its embedding into the space of pure quaternions
(a subgroup of $SU(2)$) as
\begin{align}
SO(3) & =\left\{ U\in\mathbb{R}^{3\times3}\right.\mid\label{eq:so3-spin}\\
U(u) & =  \scalemath{0.9}{
\begin{bmatrix}2(u_{0}^{2}+u_{1}^{2})-1 & 2(u_{1}u_{2}-u_{0}u_{3}) & 2(u_{1}u_{3}+u_{0}u_{2})\\
2(u_{1}u_{2}+u_{0}u_{3}) & 2(u_{0}^{2}+u_{2}^{2})-1 & 2(u_{2}u_{3}-u_{0}u_{1})\\
2(u_{1}u_{3}-u_{0}u_{2}) & 2(u_{2}u_{3}+u_{0}u_{1}) & 2(u_{0}^{2}+u_{3}^{2})-1
\end{bmatrix}}\nonumber \\
u & \triangleq\left.\left(\begin{array}{cccc}
u_{0} & u_{1} & u_{2} & u_{3}\end{array}\right),\ \|u\|_{2}^{2}=1\right\} \nonumber 
\end{align}
Noting that each term in $U(u)$ is quadratic in elements of $u$,
we may define an auxiliary matrix 
\begin{equation}
V:=\begin{bmatrix}u_{0}^{2} & u_{0}u_{1} & u_{0}u_{2} & u_{0}u_{3}\\
u_{1}u_{0} & u_{1}^{2} & u_{1}u_{2} & u_{1}u_{3}\\
u_{2}u_{0} & u_{2}u_{1} & u_{2}^{2} & u_{2}u_{3}\\
u_{3}u_{0} & u_{3}u_{1} & u_{3}u_{2} & u_{3}^{2}
\end{bmatrix}=uu^{T}.\label{eq:V}
\end{equation}
We therefore see that if $u$ is such that $U(u)\in SO(3)$, then
$V$ is a positive semi-definite (which we denote by $V\succcurlyeq0$)
$\text{rank}$-1 symmetric matrix satifsying $\text{trace}(V)=\|u\|_{2}^{2}=1$,
and that there exists an invertible affine mapping $\mathcal{A}$
such that $U(u)=\mathcal{A}(V)$. Equivalently, we may define the
set
\[
\mathcal{V}:=\{V\succcurlyeq0\ :\ \text{rank}V=1,\ \text{trace}V=1\ \}\subset\mathbb{R}^{4\times4}
\]
and express $SO(3)$ as
\begin{equation}
SO(3)=\mathcal{\mathcal{A}}\cdot\mathcal{V}\label{eq:SO3easy}
\end{equation}
for the appropriately chosen affine map $\mathcal{A}:\mathbb{R}^{4\times4}\rightarrow\mathbb{R}^{3\times3}$.
Taking the convex hull of (\ref{eq:SO3easy}) we obtain
\begin{align}
\text{conv}(SO(3)) & =  \mathcal{A}\cdot\text{conv}\mathcal{V} \nonumber \\
 & =  \mathcal{A}\cdot\text{conv}\{V\succcurlyeq0\ |\ \text{rank}V=1,\ \text{trace}V=1\}\nonumber \\
 & =  \mathcal{A}\cdot\{V\succcurlyeq0\ |\ \text{trace}V=1\}. \label{eq:so3-conv-mapping}
\end{align}
It then suffices to invert $\mathcal{A}$ to obtain the spectrahedral representation
for $\text{conv}(SO(3))$ as given in (\ref{eq:sdp_rep_so3}).
\end{proof}

\subsection{The tautological orbitope for $SE(n)$}

A spectrahedral representation of $SE(n)$ follows naturally from one for $SO(n)$.
The space of the translations is simply $\mathbb{R}^n$, which of course is already a convex set. Thus, noting that when $SE(n)$ is expressed in homogeneous coordinates, it is none other than a linear mapping of elements of $SO(n)\times\mathbb{R}^n$, we are free to take convex hulls of the individual components of the transformation matrix.  A parameterization of $S \in \text{conv}\left(SE(n)\right)$ in homogeneous
coordinates is therefore given by
\begin{equation}
S = \begin{bmatrix}
R & T \\
0 & 1
\end{bmatrix}
\label{eq:SEn}
\end{equation}
where $T\in \mathbb{R}^n$ and $R\in\text{conv}(SO(n))$.

\section{Least Squares Estimation Formulation}\label{sec:leastsquares}

We assume we are given a model of an object whose pose we wish to estimate based on sensor
data. This model consists of a vector $(m_{i})_{i=1}^{N}$ of $N$ features of a model with coordinates in
$\mathbb{R}^n$ for $n=2,3$ in the object's body-fixed reference frame.  We are then given an
observation that consists of a vector $(o_{i})_{i=1}^{N}$ of the corresponding feature coordinates
in the observation frame.  The observation frame itself may consist of a projection $P$ to a lower,
two dimensional image frame as is the case for visual data, or the data may remain in three
dimensions. In either case, we wish to solve for the object pose that is most consistent with this
observation.

The problem is general and, as framed, comprises a central component of many pose estimation
problems in robotics. All pose estimation problems consists of matching known features on a model to
observed features, whether these observations arise from an image, point cloud, or other sources.

\vskip 0.1 true in
\noindent {\bf Classical Least Squares Estimation.} 
The problem of data fitting is a common one, with the most popular approach being that least-squares
regression. In our case, the data and the model are related by a
transformation element of $SE(n)$,  with  $n\in\{2,3\}$.  For $n=2$, this leads to the optimization problem
  \begin{eqnarray}
    \min &  & \sum_{i=1}^{N}\left\Vert o_{i}-Sm_{i}\right\Vert _{2}^{2}\label{eq:se2_opt_exact}\\
    s.t. &  & S\in SE(2)\nonumber 
  \end{eqnarray}
The three-dimensionsal visual feature problem is similar but must also incorporate a projection,
$P$, onto the camera frame. The general form of the least squares optimization problem is then
 \begin{eqnarray}
   \min_{S} &  & \sum_{i=1}^{N}\left\Vert o_{i}-PSm_{i}\right\Vert _{2}^{2}
                 \label{eq:se3_opt_exact}\\
        s.t. &  & S\in SE(n)\nonumber 
  \end{eqnarray}
where $n=2,3$, and $P$ may be set to the identity, or the camera matrix, as appropriate.  Problems
(\ref{eq:se2_opt_exact}) and (\ref{eq:se3_opt_exact}) are non-convex optimization problems, and typically difficult to solve.  Beyond the issue of only finding local minima, the optimization can be slow because
$SE(n)$ is difficult to represent in a manner that is amenable to optimization, with
parametrizations typically incorporating trigonometric functions.

\vskip 0.1 true in
\noindent {\bf Convex Least Squares Estimation over $SE(n)$.} The difficulties presented by
parameterizations of $SE(n)$ in the least squares problem can be alleviated by relaxing the
problem, and replacing the Lie group constraints with their respective orbitope constraints.  In particular, by using the spectahedral representations of the orbitopes of $SE(n)$ as discussed, we may write:
  \begin{eqnarray}
     \min_{S} &  & \sum_{i=1}^{N}\left\Vert o_{i}-PSm_{i}\right\Vert _{2}^{2} \label{eq:se_n_opt_hull}\\
     s.t. &  & S\in\text{conv}\left(SE(n)\right) \nonumber
  \end{eqnarray}
for $n=2,3$, where $\text{conv}\left(SE(n)\right)$ now admits a semidefinite representation, as in
equations (\ref{eq:conv-so2-constraint}), (\ref{eq:sdp_rep_so3}) and \eqref{eq:SEn}.  

To complete the formulation, the problem data is represented in a vector form: let $O \triangleq
\left[o_{1}, \ldots, o_{N}\right]^{T}$, $M \triangleq \left[m_{1}, \ldots, m_{N}\right]^{T}$,
$\bar{P}=I_{N\times N}\otimes P$, and $\bar{S}=I_{N \times N}\otimes S$, where $\otimes$ denotes
the Kronecker Product, allowing us to rewrite optimization (\ref{eq:se_n_opt_hull}) as:
  \begin{equation}
\begin{array}{rl}
 \label{eq:vectorized_ls}
      \displaystyle\min_{\bar{S}}   & \left\Vert O-\bar{P}\bar{S}M\right\Vert _{2}^{2} \\
       s.t.   & \bar{S}\in\text{conv}\left(I\otimes SE(3)\right) 
\end{array}
  \end{equation}

\begin{rem}
It is possible to include additional feature weighting information in the optimization problem
\eqref{eq:vectorized_ls} by simply weighting $\bar{P}=C\otimes P$, and $\bar{S}=C\otimes S$ for $C$
the diagonal matrix of weights $\left\{c_1,c_2,\ldots,c_N\right\}$. This may arise, for instance, in
SIFT feature matching wherein there is a score that is calculated for each feature pair, but any
similar heuristic would be admissable.
\end{rem}



In order to formulate the pose estimation problem as a SDP, we have relaxed the constraint on the transformation matrix -- in particular, rather than searching over $SE(n)$, we now optimize over its convex hull.  Thus, in general, it is not true that the solution to \eqref{eq:vectorized_ls} will be an admissible transformation, as Euclidean transformations are not closed under convex combinations.  Thus an important, albeit obvious, question arises: when can we guarantee that the solution of our \emph{convex relaxation} \eqref{eq:vectorized_ls} is in fact an element of $SE(n)$.



\subsection{Guarantee of Boundary Solution for $SO(n)$}


\begin{thm}[Sufficient conditions for exactness] \label{thm:guarantee_boundary}
Given $n=2,3$  $O, M\in \mathbb{R}^{4N}$, and $t \in \mathbb{R}^n$, if $P\in O(n+1)$, with $O(n+1)$ the orthogonal group,  then
 the solution $R$ to the optimization problem

 \begin{eqnarray}
      \min_{R} &  & \left\Vert O-\bar{P}\bar{S}M\right\Vert _{2}^{2} \label{eq:vectorized_ls2} \\
       s.t. &  & \bar{S} = I_{N \times N}\otimes S \nonumber   \\ 
       & & S = \begin{bmatrix} R & t \\ 0 & 1 \end{bmatrix} \nonumber \\
& & R\in\text{conv}\left(SO(n)\right) \nonumber
  \end{eqnarray}
lies in $SO(n)$.
\end{thm}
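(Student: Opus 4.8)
The plan is to exploit the orthogonality of $P$ to strip the projection out of the objective, and then to reduce the relaxed program to the minimization of an affine functional over $\text{conv}(SO(n))$, whose extreme points are exactly $SO(n)$. First I would observe that $\bar P = I_{N\times N}\otimes P$ is itself orthogonal whenever $P\in O(n+1)$, so that $\bar P$ is a Euclidean isometry and $\|O-\bar P\bar S M\|_2^2 = \|\bar P^{T}O-\bar S M\|_2^2$. Writing $O' \triangleq \bar P^{T}O$, the objective no longer involves $P$ and decomposes feature-by-feature as $\sum_{i=1}^{N}\|o'_i-Sm_i\|_2^2$, where $Sm_i=(Rm_i^{s}+t,\,1)$ and $m_i^{s}\in\mathbb{R}^{n}$ is the spatial part of $m_i$. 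The homogeneous coordinate and the fixed translation $t$ contribute only terms that are constant in $R$, so the program is equivalent to minimizing $f(R)=\sum_i\|p_i-Rq_i\|_2^2$ with $p_i\triangleq (o'_i)^{s}-t$ and $q_i\triangleq m_i^{s}$.

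Next I would expand this into its constituent pieces, $f(R)=\text{const}-2\langle R,\,C\rangle+\langle R^{T}R,\,Q\rangle$, where $C\triangleq\sum_i p_i q_i^{T}$, $Q\triangleq\sum_i q_i q_i^{T}\succcurlyeq0$ is the fixed second-moment matrix of the model points, and $\langle\cdot,\cdot\rangle$ is the Frobenius inner product. The linear term $-2\langle R,C\rangle$ is exactly of the form a supporting-hyperplane argument can handle: a linear functional over the compact convex body $\text{conv}(SO(n))$ attains its minimum at an extreme point, and by the orbitope descriptions---the unit disk \eqref{eq:conv-so2-constraint} for $n=2$ and the spectrahedron \eqref{eq:sdp_rep_so3} for $n=3$---the extreme points of $\text{conv}(SO(n))$ are precisely the matrices of $SO(n)$. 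Thus, were $f$ affine in $R$, the conclusion would follow immediately.

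The main obstacle is therefore the quadratic self-term $\langle R^{T}R,Q\rangle$, which is genuinely convex rather than affine on $\text{conv}(SO(n))$, and it is here that orthogonality of $P$ must do the essential work. The key step I would aim to establish is that, along the minimizing direction, $R^{T}R$ may be treated as constant at optimality so that $f$ collapses to the affine model above. For $n=2$ this is transparent because $R^{T}R=(x^2+y^2)I$ is a scalar multiple of the identity, so $\langle R^{T}R,Q\rangle=(x^2+y^2)\,\text{trace}(Q)$ depends on $R$ only through its radius; the relaxed program then reduces to a one-dimensional radial minimization, and the remaining task is to verify that its optimizer saturates the constraint $x^2+y^2=1$ so that $R\in SO(2)$. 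For $n=3$ I would instead lift to the spectraplex representation $\text{conv}(SO(3))=\mathcal{A}\cdot\{V\succcurlyeq0:\text{trace}\,V=1\}$ from \eqref{eq:so3-conv-mapping}: since $R=\mathcal{A}(V)$ is affine in $V$, the objective becomes a function of $V$ over the spectraplex, and exactness is equivalent to the optimal $V$ being rank one. I expect the crux of the argument---and the step most in need of the orthogonality hypothesis---to be showing that this optimal $V$ is extreme, i.e. that the coupling between the linear data term $C$ and the curvature of the self-term $Q$ never favors a mixed, higher-rank state over a pure rotation.
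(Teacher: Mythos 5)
Your setup is correct as far as it goes: pulling the orthogonal $\bar P$ out of the norm, discarding the homogeneous row and the fixed translation, and reducing the program to minimizing $f(R)=\text{const}-2\langle R,C\rangle+\langle R^{T}R,Q\rangle$ over $\text{conv}(SO(n))$ is the right normal form, and you have correctly located the difficulty in the self-term $\langle R^{T}R,Q\rangle$, which is constant on $SO(n)$ but not on its hull. The gap is that you never close this step: both of your end-game claims (saturation of $x^{2}+y^{2}=1$ for $n=2$; rank-one optimality of $V$ for $n=3$) are stated as tasks, not proved, so what you have is a correct reduction plus a diagnosis, not a proof. Worse, the deferred step cannot be supplied from the stated hypotheses: the orthogonality of $P$ is fully spent the moment you replace $O$ by $\bar P^{T}O$, and has no further influence on whether the optimizer is extreme. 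Concretely, take $n=2$, $N=1$, $P=I$, $t=0$, spatial model point $q_{1}=(1,0)$ and observation $p_{1}=(\tfrac{1}{2},0)$; then $f=(x-\tfrac{1}{2})^{2}+y^{2}$ on the disk $x^{2}+y^{2}\le 1$, whose unique minimizer $R=\tfrac{1}{2}I$ is interior, hence not in $SO(2)$. The same contraction works for $n=3$, since $\tfrac{1}{2}I_{3}\in\text{conv}(SO(3))$ by \eqref{eq:sdp_rep_so3} and maps $(1,0,0)$ to $(\tfrac{1}{2},0,0)$ exactly, beating every rotation. So the saturation/rank-one property you hope to verify is false in general; any completion of your plan must add hypotheses tying $O$ to $M$ (e.g.\ that the observations are generated by an actual rigid transformation, or a spectral condition on $C$ relative to $Q$ forcing the unconstrained minimizer of $f$ outside the hull).

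For comparison, the paper's proof runs in the opposite direction: it starts from the non-convex problem with $R\in SO(n)$, where $P\rho\in O(n+1)$ makes $\|P\rho m_{i}\|_{2}^{2}=\|m_{i}\|_{2}^{2}$ constant, so that the problem over $SO(n)$ is equivalent to minimizing the linear functional $-2\langle O,\bar P\bar S M\rangle$; since linear functionals attain their minima at extreme points, and the extreme points of $\text{conv}(SO(n))$ are exactly $SO(n)$, relaxing the constraint in that \emph{linear} program costs nothing. The paper then asserts that rewriting the objective back in quadratic form yields \eqref{eq:vectorized_ls2} --- a step that is justified only on $SO(n)$, because on the hull the quadratic and linear objectives no longer differ by a constant. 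That final step is precisely the obstacle you identified, passed over in one sentence. So your diagnosis of where the real work lies is accurate, and the counterexample above shows it is not a removable technicality; but your proposal, which insists on analyzing the quadratic objective directly over the hull, runs head-on into a claim that is false under the theorem's stated hypotheses, whereas the paper's route at least delivers the correct weaker statement that the linear-objective relaxation of \eqref{eq:non-convex2} is exact.
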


\begin{proof}
Consider the non-convex optimization:
 \begin{eqnarray}
      \min_{R} &  & \left\Vert O-\bar{P}\bar{S}M\right\Vert _{2}^{2} \label{eq:non-convex} \\
       s.t. &  & \bar{S} = I_{N \times N}\otimes S \nonumber  \\
       & & S = \begin{bmatrix} R & t \\ 0 & 1 \end{bmatrix} \nonumber \\
& & R\in SO(n) \nonumber
  \end{eqnarray}
  
  Our approach will be to show that this optimization has an equivalent convex reformulation given by \eqref{eq:vectorized_ls2}. The optimization is examined in the equivalent summation form \eqref{eq:se3_opt_exact}. We first rewrite the objective so as to isolate the rotational component $R$:
    \begin{eqnarray*}
    \left\Vert o_{i}-PSm_{i}\right\Vert _{2}^{2} & = & \| o_i - P\tau -P\rho m_i \|_2^2
  \end{eqnarray*}
 where \[\rho = \begin{bmatrix} R & 0 \\ 0 & 1 \end{bmatrix}\] and $\tau = [t^T, 1]^T$. It is well known that the optimal value for $t$ is the translation that aligns the centroids of $M$ and $O$ \cite{Horn:87} and it may therefore be calculated a-priori.
  
  Recalling that for a matrix $T\in O(n+1)$, we have that $\left\Vert Tx\right\Vert _{2}=\left\Vert x\right\Vert _{2}$, and letting $v_i = o_i - P\tau$,  we may write for each $i\in\{1,\dots,N\}$,
  \begin{eqnarray*}
    \left\Vert v_{i}-P\rho m_{i}\right\Vert _{2}^{2} & = & <v_{i}-P\rho m_{i},v_{i}-P\rho m_{i}>\\
     & = & \left\Vert v_{i}\right\Vert _{2}^{2}-2<v_{i},P\rho m_{i}> \\
& & \hspace{9 mm} + <P\rho m_{i},P\rho m_{i}>\\
     & = & \left\Vert v_{i}\right\Vert _{2}^{2}-2<v_{i},P\rho m_{i}>+\left\Vert P\rho m_{i}\right\Vert _{2}^{2}\\
     & = & \left\Vert v_{i}\right\Vert _{2}^{2}-2<v_{i},P\rho m_{i}>+\left\Vert m_{i}\right\Vert _{2}^{2},
  \end{eqnarray*}
where the last equality follows from $P\rho \in O(n+1)$. As constant terms in the objective do not affect
the optimal solution, optimization \eqref{eq:non-convex} can be rewritten in terms of the \emph{linear} objective function $-2<O,\bar{P}\bar{S}M>$:
 \begin{eqnarray}
      \min_{R} &  & -2<O,\bar{P}\bar{S}M> \label{eq:non-convex2} \\
       s.t. &  & \bar{S} = I_{N \times N}\otimes S  \nonumber  \\
& & S = \begin{bmatrix} R & t \\ 0 & 1 \end{bmatrix} \nonumber \\
& & R\in SO(n) \nonumber
  \end{eqnarray}

It is a standard result of convex optimization that linear objective functions attain their minima at extreme points of the feasible set \cite{Boyd:2004uz} -- thus there is no loss in replacing the constraint $R \in SO(n)$ with $R \in \text{conv}(SO(n))$ in \eqref{eq:non-convex2}, as the set of extreme points of $\text{conv}(SO(n))$ is precisely $SO(n)$.  Rewriting the objective in its quadratic, rather than linear, form then yields \eqref{eq:vectorized_ls2}, concluding the proof.

\end{proof}

\subsection{Projection of Interior Solutions}
\label{sec:interior}
Although we have shown that the least-squares estimation over the Special Euclidean group will always produce border solutions, we will subsequently modify the optimization and lose this guarantee.  In the case where the optimal solution is not an element of $SE(n)$, a practical algorithm will require a method of quickly generating a ``nearby'' Euclidean transformation.  As a potential heuristic, we propose using a projection of the inadmissible transformation onto the admissible set.  When this projection is taken with respect to the Frobenius norm, a simple singular value decomposition (SVD) based solution exists \cite{Belta:2002euclidean}.

In particular, let a rotation $S \in \text{conv}(SO(n))$ have the SVD
\begin{equation}
S = U \Sigma V^T.
\end{equation}
Then $S' = U V^T$ is the projection of $S$ onto $SO(n)$, i.e. $S' = \arg \min \{ \|T-S\|_\text{Frobenius} : T \in SO(n)\}.$  Future work will look to quantify the performance of such heuristics.
\section{Robust Formulation}\label{sec:robust}

It is well know that the least-squares methodology is sensitive to the presence of outliers. The orbitope
optimization approach can be modified to be more resilient to outliers by framing it as a robust
least squares problem.

An advantage of using convex optimization lies in its flexibility.  The optimization problem may be
augmented with various features to improve performance beyond that obtainable by simple least
squares estimation. Typical of vision problems are the existence of spurious outliers that distort
the data, causing the estimators to diverge significantly \cite{Ma:2011ix}. It is therefore desirable that we remove the effects of such
outliers automatically.

Suppose that we wish to estimate the parameters $\beta$ such that $\left\Vert o-\beta m\right\Vert
_{2}^{2}$ is minimized for data $o,m$. We may augment the traditional $\ell_{2}$ objective with an
$\ell_{1}$ penalty term, or \emph{regularizer}, as
  \[ \min_\beta \left\Vert o-\beta m\right\Vert _{2}^{2}+\lambda\left\Vert \beta\right\Vert _{1} \]
where $\lambda>0$ is a tuning parameter, resulting in the so-called LASSO least squares problem
\cite{Tibshirani:1996um}. Interestingly, the effect of this regularizer is to drive many of the
coefficients of $\beta$ to zero, resulting in a parameter vector that is sparse.  There are many
applications where it is reasonable to expect that the true solutions are in fact sparse.  This is the
driving force behind the field of \emph{compressed sensing} \cite{Candes:2006iy}. In the pose
estimation problem it is reasonable to expect that there will exist outliers, but such outliers will
be sparse in the data, i.e.  we expect most correspondences to be accurate.

Inspired by these methods, we present an $\ell_1$-penalty based modification of optimization (\ref{eq:vectorized_ls}).   In particular, we wish to minimize $\left\Vert O-\bar{P} \bar{S} M \right\Vert _{2}^{2}$ while having a solution that is robust to outliers, i.e. $X$ contains some elements we wish to ignore. We separate the error into two components: an intrinsic error $Z_{2}$ due to sensor noise, and an error $Z_{1}$ due to incorrect correspondence (or
outliers):
   \[ O-\bar{P}\bar{S}M=Z_{1}+Z_{2} \]
The regularized problem then becomes
  \begin{eqnarray*}
     \min_{\bar{S},Z_{1},Z_{2}} &  & \left\Vert Z_{2}\right\Vert _{2}^{2} + 
                 \lambda\left\Vert Z_{1}\right\Vert _{1}\\
        s.t. &  & \bar{S}\in\text{conv}\left(I\otimes SE\left(n\right)\right)\\
             &  & O-\bar{P}\bar{S}M=Z_{1}+Z_{2},
  \end{eqnarray*}
or, after solving for $Z_{2}=(O-\bar{P}\bar{S}M)-Z_1$,
  \begin{eqnarray}
     \min_{\bar{S},Z_{1}} &  & \left\Vert (O-\bar{P}\bar{S}M)-Z_{1}\right\Vert _{2}^{2} + 
           \lambda\left\Vert Z_{1}\right\Vert _{1}\nonumber \\
           s.t. &  & \bar{S}\in\text{conv}\left(I\otimes SE\left(3\right)\right)\label{eq:regularized_opt}
   \end{eqnarray}
for some user determined $\lambda>0$. Thus in principle, $Z_{1}$ will act as a correction term in the least squares component of the objective function, removing the effect of outliers -- by adding the $\ell_{1}$ penalty and tuning parameter $\lambda$, we control the sparsity and norm of $Z_1$, and thus ensure that it does not overwhelm the correct data. Unfortunately, the lack of a linear objective causes the optimization program to lose the guarantee of being on the boundary of $\text{conv}(SO(n))$, as was the case without the regularization. 

\begin{rem}
Numerical experiments suggest that this method also typically gives solutions that are in $SE(n)$.  As mentioned in Section \ref{sec:interior}, we are exploring possible projection based heuristics for when our relaxation fails to be exact.
\end{rem}

%
%
\section{Examples} \label{sec:examples}

A benchmark pose estimation problem was used to compare the proposed method against some classical
pose estimation procedures. To perform the optimization portion of our algorithm, we used the CVX
parser \cite{cvx} in conjunction with the publicly available SDPT3 semidefinite program solver
\cite{Toh99sdpt3}.

\subsection{Three Dimensional Pose Estimation}

\begin{figure}
\begin{centering}
\includegraphics[scale=0.45]{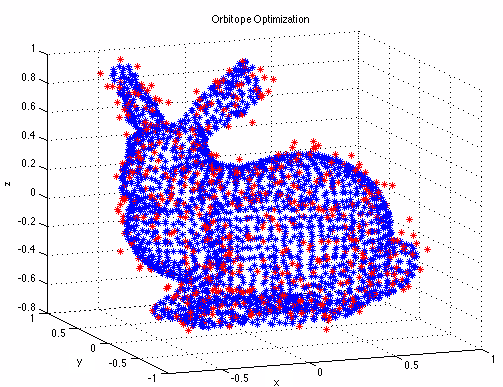}
\end{centering}
\caption{Pose estimation for a corrupted Stanford Bunny mesh. Blue points are the optimized model points while red are the observation points. Gaussian noise with zero mean and standard deviation of $\delta = .06$ was used.
was chosen for the visualization.
\label{fig:typical_bunny}}
\end{figure}

The method was tested on a pose estimation problem involving three dimensional point-cloud data. As a
ground truth model, we used the Stanford Bunny from the Stanford Computer Graphics Laboratory,
consisting of a mesh of 944 points. The model was normalized so that the bunny had maximal extent from the origin of one unit in the $y$-coordinate from the origin.  The centroid of the model was placed at the origin, as all methods may be centered in this manner. To synthesize measurements, the model points were corrupted with varying amounts of Gaussian noise. This corruption had zero mean, standard deviation $\delta$, and was applied to the $(x,y,z)$ components of each model point individually. The
correspondence between the points was maintained in order to negate the effects of this component of
the optimization problem.  A visualization of a typical result is shown in Figure
\ref{fig:typical_bunny} for a particular noise covariance value. Performance results with varying
numbers of corrupted samples are shown in Figure \ref{ls_vary_samples}. The errors reported were the sum of the squares of the individual error distances between corresponding points, where the error was taken with respect to the true, underlying model and not the observed points. We compared our method to an
implementation using Levenberg-Marquardt, as well as an implementation that aligns the principal
components of the observed data to the model using Principle Component Analysis (PCA). These are both estimates to find a transformation to match a known model to observed points, given the correspondence between these two groups. These three methods would typically be used as part of a single iteration of ICP (in particular, usually PCA), where after the solution is computed the
correspondence problem would again be solved, and the process repeated. The methods were also tested
on varying levels of noise, with the results shown in Figure \ref{ls_vary_noise}.

\begin{figure}
\begin{centering}
\includegraphics[scale=0.5]{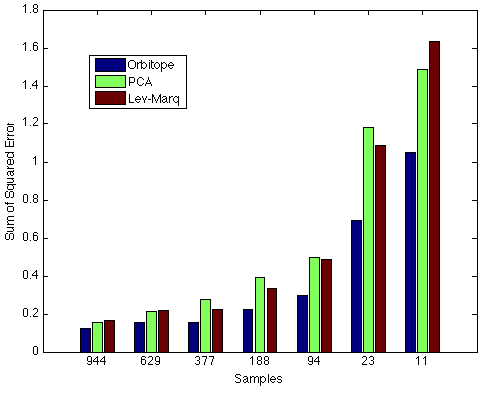}
\par\end{centering}
\caption{Mean error over twenty trials when using Orbitope, PCA, and Levenberg-Marquardt optimizations
in blue, green, and red respectively with varying numbers of randomly sampled observed points. Noise with standard
deviation $\delta=0.01$ was used. 
\label{ls_vary_samples}}
\end{figure}

\begin{figure}
\begin{centering}
\includegraphics[scale=0.24]{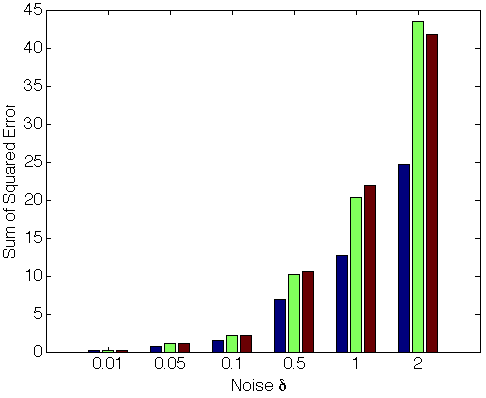}
\includegraphics[scale=0.24]{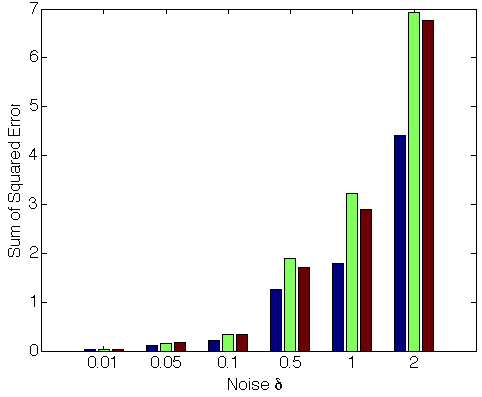}
\par\end{centering}
\caption{Mean error over twenty trials when using Orbitope, PCA, and Levenberg-Marquardt optimizations
in blue, green, and red respectively when varying noise standard deviation. Number of samples $N=23$
and $N=944$ were used in the left figures respectively. 
\label{ls_vary_noise}}
\end{figure}

The examples were run on a 1.7Ghz Macbook Air. The Levenberg-Marquardt algorithm typically required
$25$ms for execution; aligning the principal components of the data typically required $2$ms; and
the orbitope optimization lasted $130$ms. For the range of sample points used, the execution times did not vary significantly. All methods also did not change their execution time significantly with respect to the chosen noise levels. As the correspondence was perfect, all method performed
well. However, the method proposed here provided superior solutions not only in aggregate but in the individual trials as well. For instance, in the 20 trials for which the noise was held at $\delta=0.1$, and $N=944$ points, the orbitope optimization error varied between 95\% and 23\% of the Levenburg-Marquardt error.

Next we tested the outlier rejection capabilities of the method by introducing artificial outliers
into the model.  The number of outlier points was relatively sparse compared to the total number
of model points, and they were correctly detected as outliers by the algorithm for $\lambda=0.1$ The
results are visualized in Figure \ref{fig:robust_bunny}. The error residual for the robust method
was $e=0.7825$ while that of PCA was $e=25.3098$. For these noise levels, the error of the orbitope optimization is typical (Figure \ref{ls_vary_noise}) for the case where there are no outliers, indicating the shifting of the ears has only marginal impact on the solution. Indeed, it is possible to make this gap
arbitrarily large by moving the ears further. Such situations are common in practice where, for
example, collections of points may be erroneously associated with a model, or points of a model may
not be visible due to occlusion.

\begin{figure}
\includegraphics[scale=0.5]{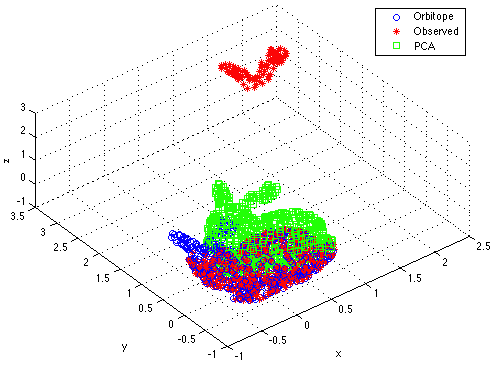}
\caption{Visualization of results for robust estimation problem. Noise of $\delta=0.01$ is added to
the normalized model, and the ears consisting of points with component $y \ge 0.6$ are translated by
$t=(2,2,2)$ to create artificial outliers. The detected data is shown in red, while the matched
model using the robust optimization is shown in blue, while the PCA match is shown in green.
\label{fig:robust_bunny}}
\end{figure}

\section{Discussion}
\label{sec:discussion}

This paper introduced a substantially new formulation of the pose estimation problem based on spectrahedral representations of the convex hulls of $SO(n)$ and $SE(n)$, for $n\in\{2,3\}$.  This reformulation allowed for a convex relaxation of the pose estimation problem, resulting in a convex least squares problem.  We also introduced an $\ell_1$ penalized variant of our problem so as to automatically remove outliers. 

We proved that under certain conditions, the solution is guaranteed
to be a proper rigid body transformation, and is therefore a \emph{globally optimal}
solution to the pose estimation problem. As mentioned, the investigation of more general conditions under which boundary solutions are guaranteed is an active area of current research, as well as the quality of projection based heuristics for when our relaxation fails to be exact.

Our numerical experiments showed that this new estimation method consistently produced more accurate
estimates, particularly at high noise levels.  Recent work \cite{Chandrasekaran:2012hl} in the
context of linear inverse problems has resulted in error bounds on regularized estimation
problems.   The application of these methods to the pose estimation problem may yield quantifiable
bounds on estimation error, and an insight as to the types of disturbances (e.g., Gaussian vs.
Poisson noise) which are best handled by the method.

The method proposed here was an order of magnitude slower than existing methods. There are a number of ways to greatly increase its speed. Our implementation relied on using CVX in conjunction with SDPT3: these are general solvers and parsers, and there is overhead accumulated. In a deployed implementation the solver would be coded directly in a language such as C++ with no parsing.  Additionally, such least squares formulations are amenable to alternating direction method of multipliers (ADMM) solutions, allowing for parallelization of the optimization procedure \cite{Boyd:admm}.  Finally, in many applications it is appropriate to use a well-informed initial guess for the solver, such as the object's last known pose. Such warm-starts are not necessary for our optimization method, but can further improve execution time.


\subsection{Acknowledgements}

The authors would like to thank Venkat Chandresakaran for discussion on the topics present in this
paper.  The first author is grateful for the support of a National Science Foundation graduate
fellowship.  This work was partially supported by DARPA under the ARM-S and DRC programs.

\bibliographystyle{abbrv}
\bibliography{orbitopes,se3_vision}

\end{document}